\documentclass[11pt, a4paper]{article}

\usepackage[utf8]{inputenc}    
\usepackage[T1]{fontenc}       
\usepackage{amsmath}           
\usepackage{amssymb}           
\usepackage{amsfonts}          
\usepackage{graphicx}          
\usepackage{amsthm}            
\usepackage{hyperref}          
\usepackage[left=2.5cm, right=2.5cm, top=2.5cm, bottom=2.5cm]{geometry} 
\usepackage{setspace}          
\usepackage{abstract}          
\usepackage{enumitem}          
\usepackage{xurl}              
\usepackage{mathtools}         
\usepackage{booktabs}          
\usepackage{array}             
\usepackage{subcaption}        
\usepackage{multirow}          
\usepackage{algorithm}         
\usepackage{xcolor}            

\newtheorem{definition}{Definition}[section] 

\newtheorem{theorem}{Theorem}[section]

\theoremstyle{definition} 
\newtheorem{example}{Example}[section]
\newtheorem{remark}{Remark}[section]

\newcommand{\semigroup}{\mathcal{S}}

\newcommand{\responsevar}{Y} 
\newcommand{\rewards}{\mathbb{R}}
\newcommand{\initialpatients}{P_0} 
\newcommand{\equivalence}{\sim}
\newcommand{\binaryvector}{\mathbf{b}} 
\newcommand{\atomicrules}{\mathcal{A}} 
\newcommand{\powerset}{\mathfrak{P}} 
\newcommand{\objectivefun}{f} 

\title{Algebraic Structure Discovery for Real World Combinatorial Optimisation Problems:
	A General Framework from Abstract Algebra to Quotient Space Learning}

\author{
	Min Sun\thanks{\texttt{min.sun.ms1@roche.com}} \and
	Federica Storti\thanks{\texttt{federica.storti@roche.com}} \and
	Valentina Martino\thanks{\texttt{valentina.martino@roche.com}} \and
	Miguel Gonzalez-Andrades\thanks{\texttt{miguel.gonzalez\_andrades@roche.com}} \and 
	Tony Kam-Thong\thanks{\texttt{tony.kam-thong@roche.com}}
}

\date{
	\small F. Hoffmann-La Roche AG, Roche Pharma Research and Early Development\\
	\today
}

\begin{document}
	
	\maketitle
	
	\begin{abstract}
		Many combinatorial optimisation problems hide algebraic structures that, once exposed, shrink the search space and improve the chance of finding the global optimal solution. We present a general framework that (i) identifies algebraic structure, (ii) formalises operations, (iii) constructs quotient spaces that collapse redundant representations, and (iv) optimises directly over these reduced spaces. Across a broad family of rule-combination tasks (e.g., patient subgroup discovery and rule-based molecular screening), conjunctive rules form a monoid. Via a characteristic-vector encoding, we prove an isomorphism to the Boolean hypercube $\{0,1\}^n$ with bitwise OR, so logical AND in rules becomes bitwise OR in the encoding. This yields a principled quotient-space formulation that groups functionally equivalent rules and guides structure-aware search. On real clinical data and synthetic benchmarks, quotient-space-aware genetic algorithms recover the global optimum in 48–77\% of runs versus 35–37\% for standard approaches, while maintaining diversity across equivalence classes. These results show that exposing and exploiting algebraic structure offers a simple, general route to more efficient combinatorial optimisation.
	\end{abstract}
	
	\doublespace 
	
	\section{Introduction}
	
	Combinatorial optimisation problems arise across diverse domains — from drug discovery, clinical research to logistics — where researchers must find optimal combinations of discrete components under complex constraints. Consider the traveling salesman problem seeking optimal city visit sequences, patient subgroup finding over categorical and continuous clinical features, or placing samples in a sequence machine such that the adjacent samples should be as different as possible for better randomisation. When these problems are treated as unstructured search spaces, standard approaches often suffer from exponential computational complexity, poor convergence to global optima, and inability to exploit underlying mathematical regularities that could dramatically reduce search effort. We propose a systematic framework for discovering and leveraging algebraic structure in combinatorial problems to achieve dramatically improved optimisation efficiency.
	
	\subsection{The General Framework}
	Our approach consists of four sequential steps:
	
	\begin{enumerate}
		\item \textbf{Structural Analysis:} Examine the real-world combinatorial problem to identify potential algebraic properties of its components and operations.
		\item \textbf{Algebraic Formalisation:} Map the problem structure to concepts from abstract algebra structures (groups, monoids, semigroups) to establishing formal operations and their properties.
		\item \textbf{Quotient Space Construction:} Identify equivalence relations that reveal redundancy in the search space, then construct quotient spaces that eliminate this redundancy while preserving optimisation objectives.
		\item \textbf{Structure-Aware Optimisation:} Design algorithms that explicitly exploit the algebraic structure and operate efficiently over the reduced quotient spaces.
	\end{enumerate}
	
	While not all combinatorial problems admit useful algebraic structure, we demonstrate that this framework can yield substantial improvements for a meaningful class of problems where such structure exists.
	
	\textbf{Bridging Theory and Practice.} Abstract algebra encompassing groups, rings, fields, and other algebraic structures, has traditionally been perceived as belonging to pure mathematics and theoretical physics, with limited direct application to everyday data analysis problems. Most data scientists and applied researchers encounter algebraic concepts primarily in linear algebra \cite{deisenroth2020mathematics}, but rarely consider more sophisticated algebraic structures as practical tools for combinatorial optimisation. This perception has created a substantial gap: while abstract algebra provides powerful conceptual frameworks for understanding structure and equivalence, these tools remain largely under used in real world combinatorial problems where they could offer significant computational advantages. Our work demonstrates that this gap represents a missed opportunity — many real world combinatorial problems naturally exhibit algebraic structure that, when properly identified and exploited, can transform intractable search spaces into manageable, mathematically principled optimisation challenges.
	
	\subsection{Motivating Applications}
	We demonstrate our framework on two distinct yet structurally similar combinatorial optimisation challenges:
	
	\textbf{Patient Subgroup Discovery (detailed with benchmark):} In personalised medicine, researchers seek to identify patient subgroups characterised by specific clinical characteristics combinations. Traditional approaches often see this as a classification problem \cite{Zhang2018subgroup}, but the issue is that not all such problems have fixed label. 
	
	Imagine the case where researchers would like to find a patient subgroup whose some protein level needs to be much elevated comparing to healthy volunteers based on the value of some clinical tests. For each patient, the label of whether he/she belongs to this subgroup is not fixed (the label may change if the set of clinical tests changes). This is contrary to classification problems where the label of each datapoint is always fixed. On the other hand this is also not a clustering problem as clustering doesn't even have labels. In fact when labels are not fixed, this problem is much closer to playing a video game (e.g. the well known Super Mario) once we analyse it from algebraic perspective.
	
	In Super Mario, the player combines atomic actions \{Left, Right, Up, Down, Jump\} through sequential composition to navigate towards a goal state, where different sequences of the same moves can lead to equivalent outcomes or entirely different results depending on context. Similarly, in patient subgroup discovery, researchers combine atomic clinical criteria \{clinical test 1 > x, clinical test 2 < y ...\} through logical conjunction to define patient subgroup. Both problems exhibit: (1) \emph{compositional structure} where atomic elements are combined via composition (2) \emph{equivalence classes} where multiple distinct action sequences can achieve the same functional result. This algebraic perspective transforms what appears to be heuristic search into principled mathematical optimisation over structured spaces.
	
	\textbf{Rule-Based Molecular Screening (no benchmark, only the concept):} In drug discovery, researchers apply combinations of Boolean filter predicates to eliminate compounds with poor drug-likeness, following the 'fail early, fail cheaply' principle \cite{Oprea2002,VSbestPractice2021,PhamThe2020,Sukhachev2025}. Here researchers could be interested in finding a subset of molecules which predicted to have desired properites, e.g. high binding affinity or other ADME (absorption, distribution, metabolism, and excretion) metrics. The optimal combination of filters represents a complex combinatorial optimisation problem.
	
	Both problems involve combining Boolean predicates through conjunction, suggesting they may share underlying algebraic structure. Our analysis reveals that both naturally exhibit \textbf{monoid structure}, enabling systematic discovery of \textbf{equivalence classes} of functionally identical rule combinations. This transforms intractable combinatorial search into structured optimisation over well-defined quotient spaces.
	
	In simpler terms: imagine you have a toolbox of simple yes/no rules (like "patient age > 65" or "molecule weight < 500"). The key insight is that when you combine these rules using "AND" logic, you are actually following mathematical patterns that can be exploited. Many different combinations of rules will identify exactly the same patients or molecules - these are "equivalent" even though they look different on paper. Instead of testing millions of random combinations, our approach groups together all the equivalent rule combinations and tests only one representative from each group. This dramatically reduces the search space while guaranteeing we do not miss the optimal solution - like having a smart filing system that eliminates duplicate folders before you start searching. On the other hand, by keeping one individual from each group (equivalence class), we ensure that our search explores all the different "types" of solutions, preventing us from getting stuck in one corner of the search space. This balance between reducing redundancy and maintaining diversity is what makes our method both efficient and effective.
	
	\subsection{Key Contributions} \label{overall_framework}
	\begin{enumerate}
		\item A general framework for discovering and exploiting algebraic structure in combinatorial optimisation problems
		\item Formal proof that conjunctive rule problems exhibit monoid structure with natural quotient space constructions
		\item Demonstration that the framework applies to diverse domains (with clinical research as the main case study)
		\item Empirical validation (with real clinical data) showing 48-77\% global optimum achievement versus 35-37\% for standard approaches
		\item Open-source implementation enabling adoption across disciplines
	\end{enumerate}

	\section{Case Study: Patient Subgroup Discovery}
	
	We demonstrate our framework using patient subgroup discovery as the primary case study, then show how identical algebraic structure also emerges in other real world scenarios such as molecular screening.
	
	\subsection{Step 1: Structural Analysis of the Real-World Problem}
	Let $\initialpatients$ denote the finite set of all available patients in a given cohort. For each patient $p \in \initialpatients$, we possess a vector of clinical measurements $C(p) \in \rewards^k$ and a continuous biomarker measurement $\responsevar(p) \in \rewards$. Our objective is to identify a patient subgroup $P^* \subseteq \initialpatients$ defined by a logical rule $r^*$ (on a set of clinical measurements) such that the mean biomarker fold change of $P^*$ relative to a healthy volunteer (HV) cohort is maximised.
	
	Researchers typically combine clinical criteria using logical conjunction. For example: patients with Ocular Surface Disease Index (OSDI)
	score > 12 AND Tear Break-Up Time (TBUT) < 10 seconds AND presence of meibomian gland dysfunction (these are typical assessments of the ocular surface and tear film health). This suggests that the fundamental operation is the conjunction ($\land$) of Boolean-valued functions. In practice, the list of criteria is often a mix of continuous and categorical variables, which makes gradient based optimisation unsuitable.
	
	From a mathematical perspective, the combinatorial explosion occurs because researchers must explore all possible combinations of available clinical criteria.

	\subsection{Step 2: Algebraic Formalisation}
	
	\begin{definition}[Atomic Clinical Rules]
		Let $\atomicrules = \{a_1, a_2, \ldots, a_n\}$ be a finite set of $n$ distinct \textbf{atomic clinical rules}. Each $a_i$ is a boolean predicate defined on a patient's clinical measurements $C(p)$ (e.g., $a_1(p) = (\text{OSDI}(p) > 12)$). Applying $a_i$ to a set of patients $P$ yields the subset $a_i(P) = \{p \in P \mid p \text{ is selected by } a_i\}$.
	\end{definition}
	
	\begin{definition}[Monoid of Conjunctive Rules]
		Let $\semigroup$ denote the set of all \textbf{composite rules} formed by conjunctions of atomic rules from $\atomicrules$, including the \textbf{identity rule} $\varepsilon$ (empty conjunction) that applies no filtering.
		
		The binary operation $\cdot$ on $\semigroup$ represents rule combination via logical conjunction. For any $r_1, r_2 \in \semigroup$, the composite rule $r_1 \cdot r_2 \equiv r_1 \land r_2$ represents the conjunction of both rules.
		
		This operation is associative and admits an identity element $\varepsilon$, forming the monoid $(\semigroup, \land, \varepsilon)$.
	\end{definition}
	
	\begin{definition}[Monoid Action on Patient Populations]
		The monoid $\semigroup$ acts on patient populations $\initialpatients$ via the filtering operation $r(P) = \{p \in P \mid p \text{ is selected by } r\}$, satisfying the action axioms: (1) $\varepsilon(P) = P$ (identity preservation), and (2) $(r_1 \cdot r_2)(P) = r_1(r_2(P))$ (associativity).
	\end{definition}

	\begin{remark}[Practical Semigroup Action]
		While the mathematical structure forms a complete monoid, in practical optimisation we exclude the identity element $\varepsilon$ (which applies no filtering and returns the entire patient population). This restriction makes our working space effectively a semigroup $\semigroup \setminus \{\varepsilon\}$, where all considered rules perform meaningful patient filtering. Thus, we primarily work with a \textbf{semigroup action} on patient populations, though the theoretical foundation remains a monoid.
	\end{remark}

	\begin{remark}[Super Mario as an algebraic structure]
		 The monoid structure reveals that combining rules is not merely an ad-hoc process, but follows precise algebraic laws. This mathematical perspective allows systematic analysis of the search space. This is where we see the patient stratification problem similar to playing \emph{Super Mario}, where movements $\{\text{Left, Right, Up, Down, Jump}\}$ can be combined through the binary operation 'composition of moves', acting on Mario's current position. Here the algebra is clear: actions compose sequentially and define a state transition system. However, the semantics of moves may introduce context-dependence (e.g., pressing 'Down' before 'Left' may send Mario underground via a tube, whereas reversing the order does not). This illustrates that processes which at first appear heuristic can often be understood in algebraic terms once the underlying structure and equivalences are identified.
	\end{remark}
	
	\subsection{Step 3: Quotient Space Construction}
	
	The critical insight for optimisation efficiency comes from recognising that many distinct clinical rules yield functionally equivalent outcomes.
	
	\subsubsection{Computational Representation via Isomorphism}
	Before constructing quotient spaces, we establish a computationally convenient representation:
	
	\begin{definition}[Binary Vector Representation of Rules]
		Let $V = \{0,1\}^n$ be the set of all $n$-dimensional binary vectors. We define a mapping $\phi: \semigroup \to V$ for each composite rule $r \in \semigroup$. Given that $r$ is characterised by its set of atomic rules $S_r \subseteq \atomicrules$, we define $\phi(r) = \binaryvector = (b_1, b_2, \ldots, b_n) \in V$, where $b_i = 1$ if $a_i \in S_r$ (i.e., $a_i$ is a conjunct in $r$), and $b_i = 0$ otherwise.
	\end{definition}
	
	\begin{remark}[AND in rules becomes OR in encoding]
		Rule composition uses logical conjunction ($\land$), which corresponds to set union of atomic predicates: $S_{r_1 \land r_2} = S_{r_1} \cup S_{r_2}$. Under the 0-1 encoding $\phi$, this union becomes bitwise OR ($\lor$) on the Boolean hypercube: $\phi(r_1 \land r_2) = \phi(r_1) \lor \phi(r_2)$. We emphasise 'Boolean hypercube with bitwise OR' to make this mapping explicit.
	\end{remark}

	\begin{example}[Binary Vector Representation of Rules]
		Consider the set of atomic rules $\atomicrules = \{\text{Dry eye type: aqueous deficient, Dry eye type: evaporative, Dry eye severity: mild}\}$ and a composite rule $r$ characterized by $S_r = \{\text{Dry eye type: aqueous deficient AND Dry eye severity: mild}\}$. The resulting binary vector is $\phi(r) = \binaryvector = (1, 0, 1)$.
	\end{example}
	
	\begin{theorem}[Isomorphism to the Boolean hypercube with bitwise OR]
		The semigroup $(\semigroup, \land)$ is isomorphic to $(V, \lor)$ on the Boolean hypercube, where $V = \{0,1\}^n$ and $\lor$ denotes element-wise (bitwise) logical OR. 
	\end{theorem}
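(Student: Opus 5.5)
The plan is to show that the map $\phi$ from the Binary Vector Representation definition is a semigroup (indeed monoid) isomorphism. The key is to be explicit about what an element of $\semigroup$ is. Since $\land$ is associative, commutative and idempotent, a composite rule is determined by its set of conjuncts $S_r \subseteq \atomicrules$. Reorderings and repetitions of atoms, such as $a_1 \land a_2$ versus $a_2 \land a_1 \land a_1$, are identified as the same element. I will therefore first fix $\semigroup$ as the set of formal conjunctions modulo associativity, commutativity and idempotence, with $\varepsilon$ corresponding to $S_\varepsilon = \emptyset$. This makes $r \mapsto S_r$ a well-defined bijection $\semigroup \to \powerset(\atomicrules)$. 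Its inverse sends $S$ to $\bigwedge_{a \in S} a$, or to $\varepsilon$ when $S = \emptyset$.

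\textbf{Step 1 (well-definedness and bijectivity).} The map $\phi$ factors as $r \mapsto S_r \mapsto \chi_{S_r}$, where $\chi_S \in V$ is the characteristic vector of $S$ with respect to the fixed enumeration $a_1,\dots,a_n$. The map $S \mapsto \chi_S$ is the standard bijection $\powerset(\atomicrules) \to \{0,1\}^n$. Hence $\phi$ is a composite of bijections and so is bijective. In particular $|\semigroup| = 2^n$.

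\textbf{Step 2 (homomorphism).} First I verify $S_{r_1 \land r_2} = S_{r_1} \cup S_{r_2}$. This holds because the conjuncts of $r_1 \land r_2$ are exactly those of $r_1$ together with those of $r_2$, with duplicates collapsed by idempotence. Then I compare coordinates. The $i$-th coordinate of $\phi(r_1 \land r_2)$ equals $1$ if and only if $a_i \in S_{r_1} \cup S_{r_2}$. That happens if and only if $b_i(r_1) = 1$ or $b_i(r_2) = 1$, which is exactly the $i$-th coordinate of $\phi(r_1) \lor \phi(r_2)$. Hence $\phi(r_1 \land r_2) = \phi(r_1) \lor \phi(r_2)$. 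Also $\phi(\varepsilon) = \mathbf{0}$, which is the identity for bitwise OR. So $\phi$ is in fact a monoid isomorphism. Restricting it gives the semigroup statement, and also the isomorphism $\semigroup \setminus \{\varepsilon\} \cong V \setminus \{\mathbf{0}\}$ relevant to the earlier remark on the practical semigroup action.

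\textbf{Main obstacle.} The only delicate point is Step 0: making sure $\semigroup$ is defined syntactically, via sets of conjuncts, and not semantically, via the patient subsets a rule selects. Under a semantic definition, distinct sets such as $\{a_1\}$ and $\{a_1, a_2\}$ with $a_1 \Rightarrow a_2$ on $\initialpatients$ would yield the same element. Then $\phi$ would not even be well defined, and injectivity would fail. I will state explicitly that functional equality is \emph{not} used here. That coarser identification is precisely the equivalence relation to be quotiented out in the subsequent construction, and it is what makes the isomorphism with the full hypercube a clean, redundancy-free encoding on which the quotient is then built.
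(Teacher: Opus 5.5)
Your proposal is correct and follows essentially the same route as the paper: bijectivity of $\phi$ via the correspondence between $r$, its conjunct set $S_r$ and the characteristic vector, and operation preservation via $S_{r_1\land r_2}=S_{r_1}\cup S_{r_2}$ checked coordinatewise. Your explicit stipulation that $\semigroup$ consists of formal conjunctions modulo associativity, commutativity and idempotence, rather than rules identified by the patients they select, is a worthwhile addition: it is exactly what justifies the paper's unstated step ``$S_{r_1}=S_{r_2}$, and thus $r_1=r_2$'' in the injectivity argument, and your check $\phi(\varepsilon)=\mathbf{0}$ upgrades the result to a monoid isomorphism.
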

	
	\begin{proof}
		To establish isomorphism, we must demonstrate that the mapping $\phi: \semigroup \to V$ is a bijection and that it preserves the semigroup operation.
		
		\begin{enumerate}
			\item \textbf{Bijection:}
			\begin{itemize}
				\item \textbf{Injectivity:} Assume $\phi(r_1) = \phi(r_2)$ for $r_1, r_2 \in \semigroup$. This implies that their corresponding binary vectors are identical, i.e., $b_{1,i} = b_{2,i}$ for all $i=1,\ldots,n$. By definition of $\phi$, this signifies that $S_{r_1} = S_{r_2}$, and thus $r_1 = r_2$. Hence, $\phi$ is injective.
				\item \textbf{Surjectivity:} For any binary vector $\binaryvector = (b_1, \ldots, b_n) \in V$, we can construct a composite rule $r \in \semigroup$ by defining its set of atomic rules as $S_r = \{a_i \in \atomicrules \mid b_i = 1\}$. This rule $r$ is a conjunction of precisely those atomic rules corresponding to the '1's in $\binaryvector$. By construction, $\phi(r) = \binaryvector$. Hence, $\phi$ is surjective.
			\end{itemize}
			Since $\phi$ is both injective and surjective, it is a bijection.
			
			\item \textbf{Operation Preservation:} We need to show that $\phi(r_1 \land r_2) = \phi(r_1) \lor \phi(r_2)$, where $\land$ on the left side is the semigroup operation on rules (logical conjunction, corresponding to union of atomic rule sets) and $\lor$ on the right side is bitwise logical OR.
			Let $r_1$ be characterised by $S_{r_1}$ and $r_2$ by $S_{r_2}$.
			The composite rule $r_1 \land r_2$ is characterised by the set of atomic rules $S_{r_1 \land r_2} = S_{r_1} \cup S_{r_2}$.
			Now consider the $i$-th component of the binary vector $\phi(r_1 \land r_2)$:
			$$ (\phi(r_1 \land r_2))_i = 1 \iff a_i \in S_{r_1 \land r_2} $$
			This is true if and only if $a_i \in S_{r_1} \cup S_{r_2}$, which means $a_i \in S_{r_1}$ or $a_i \in S_{r_2}$.
			By definition of $\phi$, this is equivalent to $(\phi(r_1))_i = 1$ or $(\phi(r_2))_i = 1$.
			Therefore, $(\phi(r_1 \land r_2))_i = (\phi(r_1))_i \lor (\phi(r_2))_i$ for all $i=1,\ldots,n$.
			This confirms that $\phi(r_1 \land r_2) = \phi(r_1) \lor \phi(r_2)$, preserving the operation.
		\end{enumerate}
		Since $\phi$ is a bijection and preserves the operation, it is an isomorphism. Consequently, we can equivalently perform our search and modelling on the Boolean hypercube $V$ utilising bitwise OR as the composition operator, which aligns with the structure implicitly handled by the Hamming distance metric.
	\end{proof}

	\begin{remark}[Computational significance on the Boolean hypercube]
		The isomorphism $\phi: (\semigroup, \land) \cong (V, \lor)$ provides crucial computational advantages for solving our combinatorial optimisation problem on the Boolean hypercube with bitwise OR:
		
		\textbf{Tractable Representation:} Rather than working directly with abstract logical rules, we can represent each rule as a simple binary vector where each bit indicates whether a particular atomic criterion is included. This transforms complex logical operations into straightforward bitwise operations that are natively supported by modern computing architectures.
		
		\textbf{Efficient Distance Computation:} The isomorphism enables natural use of Hamming distance $d_H(\binaryvector_1, \binaryvector_2) = \sum_{i=1}^n |b_{1,i} - b_{2,i}|$ to measure similarity between rules. Rules that differ in few atomic criteria have small Hamming distance, whilst those using completely different criteria are far apart. This geometric structure facilitates clustering algorithms like DBSCAN for discovering equivalence classes.
		
		\textbf{Structure-Preserving Operations:} Genetic algorithm operators (crossover, mutation) can operate directly on binary vectors whilst respecting the underlying algebraic structure. Single-point crossover naturally corresponds to combining rule subsets, and bit-flip mutation corresponds to adding or removing atomic criteria from rules.
		
		In essence, the isomorphism bridges the semantic gap between our problem domain (logical rules for patient stratification) and computational machinery (binary vector arithmetic), enabling efficient implementation whilst preserving all essential mathematical structure.
	\end{remark}
	
	\begin{example}
		Consider atomic rules derived from categorical variables: Dry Eye (DED) severity $\in \{\text{healthy, mild, moderate, severe}\}$, Gender $\in \{\text{male, female}\}$, and Meibomian Gland Dysfunction (MGD) $\in \{\text{absent, present}\}$. This yields atomic rules:
		\begin{align*}
			\atomicrules = \{&a_1: \text{DED = healthy}, \quad a_2: \text{DED = mild}, \quad a_3: \text{DED = moderate}, \quad a_4: \text{DED = severe}, \nonumber \\
			&a_5: \text{Gender = male}, \quad a_6: \text{Gender = female}, \nonumber \\
			&a_7: \text{MGD absent}, \quad a_8: \text{MGD present}\} \nonumber
		\end{align*}
		The composite rule $r_1 = a_3 \land a_8$ (moderate dry eye with MGD) and rule $r_2 = a_6$ (female patients) map to binary vectors:
		\begin{align*}
			\phi(r_1) &= (0,0,1,0,0,0,0,1) \\
			\phi(r_2) &= (0,0,0,0,0,1,0,0)
		\end{align*}
		Under the isomorphism, combining rules corresponds to bitwise OR:
		\begin{align*}
			\phi(r_1 \land r_2) &= \phi(r_1) \lor \phi(r_2) \\
			&= (0,0,1,0,0,0,0,1) \lor (0,0,0,0,0,1,0,0) \\
			&= (0,0,1,0,0,1,0,1)
		\end{align*}
		representing female patients with moderate dry eye and MGD. The Hamming distance $d_H(\phi(r_1), \phi(r_2)) = 3$ indicates these rules differ in three atomic criteria, whilst genetic crossover might naturally combine or separate these categorical conditions.
	\end{example}

	\subsection{Equivalence Classes and Quotient Spaces: The Core of Patient Stratification}
	The fundamental insight of our algebraic approach lies in recognising that many distinct clinical rules yield functionally equivalent outcomes. This leads naturally to the concept of \textbf{equivalence classes} - collections of rules that produce similar clinical results. By identifying and exploiting these equivalence classes, we can dramatically reduce the complexity of the optimisation problem.
	
	\begin{definition}[Objective Function on Rules]
		For each rule $r \in \semigroup$, its application to the initial patient cohort $\initialpatients$ yields a subset of patients $r(\initialpatients)$. Denote the powerset of $P_0$ by $\powerset(P_0)$,  we define the objective function for a rule as $\objectivefun(r) \coloneqq R(r(\initialpatients))$, where $R: \powerset(\initialpatients) \to \rewards$ is the reward function quantifying the biomarker fold change (or other relevant metric) for a given subgroup vs healthy volunteers.
	\end{definition}
	
	\begin{definition}[Equivalence Relation on Rules]
		We define an \textbf{equivalence relation} $\equivalence$ on the semigroup of rules $\semigroup$ based on exact equality of objective function values. Two rules $r_1, r_2 \in \semigroup$ are considered \textbf{equivalent} ($r_1 \equivalence r_2$) if:
		$$r_1 \equivalence r_2 \quad \iff \quad \objectivefun(r_1) = \objectivefun(r_2)$$
		This defines a proper equivalence relation that partitions the rule space into equivalence classes of rules with identical objective function values.
	\end{definition}
	
	\begin{remark}[Practical Approximate Equivalence]\label{rem:practical-equivalence}
		While the theoretical equivalence relation requires exact equality of objective function values, in practice, e.g. our case study, a set of rules which gives 10 times higher protein level vs healthy volunteer makes no difference than another set of rules with 10.1 fold change to clinicians or clinical scientists. In this case they would instead value more on the feasibilty of recruitment of such subgroup of patients, not the absolute fold change. Therefore in implementation, we employ an \textbf{approximate equivalence} relation using a small tolerance $\epsilon > 0$:
		$$r_1 \approx r_2 \quad \text{if} \quad |\objectivefun(r_1) - \objectivefun(r_2)| < \epsilon$$
		This approximate relation, while not mathematically transitive, provides a practical clustering criterion for identifying functionally similar rules during optimization. The choice of $\epsilon$ represents a trade-off between mathematical precision and computational efficiency. For \emph{clinical applications, minimal clinically important difference} could be a good choice of $\epsilon$.
				
		Here the exact equivalence induces the true quotient whereas $\epsilon$-proximity is a practical surrogate for exploration and analysis.
	\end{remark}	
	
	\begin{remark}[Two exact equivalences and their roles]
		We distinguish two notion of equivalences:
		\begin{itemize}
			\item \emph{Functional equivalence:} $r_1 \equivalence r_2$ iff $\objectivefun(r_1)=\objectivefun(r_2)$ — this is our paper's canonical exact equivalence for quotienting the search space
			\item \emph{Subset equivalence:} $r_1 = r_2$ iff $r_1(\initialpatients)=r_2(\initialpatients)$, the subgroup of patients are identical under rule $r_1$ and $r_2$
		\end{itemize}
		In our case study the objective depends only on the induced subset, by definition subset equivalence implies functional equivalence.
	\end{remark}
	
	\begin{definition}[Quotient Space]
		For a set $X$ and an equivalence relation $\sim$ on $X$, the \textbf{quotient space} $X/\sim$ is the set of all equivalence classes of $X$ under $\sim$. Each element in $X/\sim$ is an equivalence class $[x] = \{y \in X \mid y \sim x\}$.
	\end{definition}
	
	\begin{remark}[Relationship between Equivalence Classes and Quotient Space]
		Our primary interest lies in grouping rules based on the similarity of their induced outcomes. The empirically learned equivalence classes defined by $\epsilon$-proximity in objective function values form a partition of the rule space $\semigroup$. This partition induces a "quotient space" $\semigroup/\equivalence$, where each element represents a set of rules that are functionally equivalent with respect to our optimisation objective. This allows us to conceptually and practically optimise over this reduced, effective search space.
	\end{remark}
	
	The existence of such equivalence classes implies redundancy in the search space. By identifying these, we can focus our computational efforts on evaluating only a representative from each class, significantly accelerating the search for optimal rule sets.

	\subsection{Step 4: Structure-Aware Optimisation Algorithms}
	
	Having established the general framework, we now complete our methodology by showing how to design optimisation algorithms that explicitly exploit algebraic structure and quotient spaces.
	
	\subsubsection{Key Algorithmic Principles}
	
	Structure-aware optimisation algorithms must incorporate several key principles:
	
	\begin{enumerate}
		\item \textbf{Equivalence Class Integration:} Algorithms must actively identify and exploit equivalence classes during the search process
		\item \textbf{Quotient Space Navigation:} Search procedures should operate over representatives of equivalence classes rather than the full search space
		\item \textbf{Structure-Preserving Operations:} Genetic operators and acquisition functions must respect the underlying algebraic structure
		\item \textbf{Niche Preservation:} Population-based methods should maintain diversity across different equivalence classes
	\end{enumerate}

	\subsubsection{Standard Genetic Algorithm (GA) Approach}
	\textbf{Chromosome Encoding:} Each individual in the GA population represents a potential clinical rule through a chromosome encoding that handles both numeric and categorical clinical metrics:
	\begin{itemize}[noitemsep,topsep=0pt]
		\item \textbf{Numeric Metrics:} Each numeric clinical measurement (e.g., OSDI score, TBUT) is encoded using 3 genes: (1) Binary activation gene (0=unused, 1=used), (2) Binary operator gene (0=$\leq$, 1=$>$), (3) Real-valued threshold gene (within the metric's observed range).
		\item \textbf{Categorical Metrics:} Each categorical measurement (e.g., DED\_TYPE) is encoded using $1+K$ genes: A binary activation gene (0=unused, 1=used), followed by $K$ Binary level selection genes for each of the $K$ categorical levels.
	\end{itemize}
	This encoding naturally maps to our monoid representation through conversion to binary atomic rule vectors, enabling seamless integration with the algebraic framework.
	
	\textbf{Fitness Evaluation:} Each chromosome is decoded into a clinical rule, applied to the patient cohort to generate a subset of patient cohort, and evaluated using the biomarker fold change objective function. The GA optimisation process seeks to maximise this fitness while maintaining subgroup size constraints for statistical validity. Afterall a subcohort of 1 patient makes no sense in practice even with super high fold change.
	
	\subsubsection{Quotient-Space-Aware Genetic Algorithm with Niche Elite Preservation.}
	Building upon the algebraic framework of equivalence class learning, we implement an enhanced GA that incorporates explicit equivalence class detection and niche preservation (where the best individual from each equivalence class is directly carried over to the next generation) to maintain population diversity across functionally distinct rule equivalence classes.
	
	\textbf{Equivalence Class Detection Mechanism:} At regular intervals during evolution (every $k$ generations, typically $k=10$), the algorithm performs equivalence class discovery:
	\begin{enumerate}
		\item Convert chromosomes to binary atomic rule vectors
		\item Cluster via $\epsilon$-functional proximity on objective values: group individuals $x_i, x_j$ if $|f(x_i) - f(x_j)| < \epsilon$ (e.g., DBSCAN on the 1D set of $f$-values)
		\item Preserve niche elites (highest fitness individual) from each equivalence class
	\end{enumerate}
	This ensures population diversity across functionally distinct rule equivalence classes while preventing premature convergence.

	\subsubsection{Algorithm Specification}
	
	The following pseudocode (Algorithm~\ref{alg:ga_orbit_comparison}) presents both standard GA and our quotient-aware extension side-by-side, highlighting the key algorithmic differences:
	
	\begin{algorithm}[H]
		\caption{Genetic Algorithm: Standard vs. Quotient-Aware Comparison}
		\label{alg:ga_orbit_comparison}
		\small
		
		\textbf{Notation:} $P_t$ = population at generation $t$, $\mathbf{v}_i$ = binary atomic rule vector for chromosome $i$, $E$ = elite set (best from each equivalence class), $V$ = set of valid atomic vectors (i.e., $|V|$ = number of individuals (in the sense of genetic algorithm, not patient size)) with non-zero vectors), $\mathcal{C}$ = DBSCAN clusters, $f(x_i)$ = fold change of a specific protein level of the patient subgroup vs healthy volunteers
		
		\vspace{0.3em}
		\textbf{Input:} Population size $N$, generations $T$, crossover/mutation rates $p_c, p_m$ \\
		\textbf{Input (Quotient-aware):} Quotient space check interval $\tau$, DBSCAN radius $\epsilon$, minimum sample size required for each equivalence class ($\text{minPts}$)
		
		\vspace{0.3em}
		\textbf{1.} Initialize $P_0 = \{x_1^{(0)}, \ldots, x_N^{(0)}\}$ randomly; Evaluate $f(x_i^{(0)})$ \hfill [Standard]
		
		\textbf{2.} \textbf{For} $t = 1$ to $T$:
		
		\quad \textbf{2.1} \textbf{Equivalence Class Detection (Quotient-aware only):}
		
		\quad \quad \textbf{If} quotient-aware \textbf{AND} $(t = 1$ \textbf{OR} $t \bmod \tau = 0)$: \hfill [Quotient]
		
		\quad \quad \quad Convert to atomic vectors: $\mathbf{v}_i = \phi(x_i^{(t-1)})$ \hfill [Quotient]
		
		\quad \quad \quad Filter valid vectors: $V = \{\mathbf{v}_i : \mathbf{v}_i \neq \mathbf{0}\}$ \hfill [Quotient]
		
		\quad \quad \quad \textbf{If} $|V| \geq \text{minPts}$: Apply $\mathcal{C} \leftarrow \text{DBSCAN}(V, \epsilon, \text{minPts})$ \hfill [Quotient]
		
		\quad \quad \quad Preserve niche elites from each $k$-th equivalence class \hfill [Quotient]
		\begin{align*}
			E = \bigcup_{C_k \in \mathcal{C}: |C_k| \geq \text{minPts}, k \neq 0} \{\arg\max_{x_i \in C_k} f(x_i)\}
		\end{align*}

		\quad \textbf{2.2} \textbf{Population Construction (Quotient-aware only):}
		
		\quad \quad Initialize: $P_t^{\text{new}} = \text{unique}(E)$ (if quotient-aware and $E$ exists), else $P_t^{\text{new}} = \emptyset$ \hfill [Quotient]
		
		\quad \quad Remaining slots: $n_t = N - |P_t^{\text{new}}|$ \hfill [Quotient]
		
		\quad \textbf{2.3} \textbf{Standard GA Operations:} Generate $n_t$ offspring via selection, crossover, mutation \hfill [Standard]
		
		\quad \textbf{2.4} Update: $P_t \leftarrow P_t^{\text{new}}$; Evaluate $f(x_i^{(t)})$; Track global best \hfill [Standard]
		
		\textbf{3.} \textbf{Return:} Best solution across all generations
	\end{algorithm}
	
	\textbf{Key Algorithmic Differences:}
	\begin{itemize}
		\item \textbf{Standard GA} follows conventional selection-crossover-mutation cycles without diversity preservation mechanisms
		\item \textbf{Quotient-aware GA} introduces periodic detection of functionally similar niches via $\epsilon$-functional proximity on objective values, ensuring niche elite preservation from each discovered equivalence class
		\item The quotient space check interval $\tau$ balances computational overhead with diversity maintenance - frequent checks preserve more diversity but increase runtime
	\end{itemize}

	\subsubsection{Bayesian Optimisation (BO) Approach}
	For comparison, we implemented BO with equivalence class learning integration, though our results demonstrate its limitations for discrete combinatorial spaces:
	\begin{enumerate}
		\item \textbf{Equivalence Class Discovery:} At every BO iteration, cluster observed objective values with DBSCAN \cite{ester1996density} using an $\epsilon$ tolerance to identify empirical equivalence classes (i.e. with $|f_i - f_j| < \epsilon$)
		\item \textbf{Gaussian Process Modeling:} GP surrogate \cite{rasmussen2006gaussian} with Hamming distance kernel: $K(\binaryvector_1, \binaryvector_2) = \theta_0 \exp(-\theta_1 d_H(\binaryvector_1, \binaryvector_2)) + \theta_2 \delta(\binaryvector_1, \binaryvector_2)$, where $\delta$ is kronecker delta
		\item \textbf{Sequential Optimisation:} Expected Improvement acquisition function \cite{jones1998efficient} operating on the quotient space of equivalence class representatives
	\end{enumerate}

	\section{Empirical Evaluation and Results}
	We evaluated five optimisation approaches (GA, quotient-space-aware GA, BO, quotient-space-aware BO, greedy search) across four scenarios: real/synthetic data × with/without numeric features. 
	
	\subsection{Experimental Design}
	\subsubsection{Dataset Configurations}
	Synthetic data was randomly generated so that the variables are identical to the real clinical dataset, but the biomarker values are randomly sampled from uniform distributions for both categorical and continuous variables. 
	
	\subsubsection{Experimental Benchmark Configuration}
	To ensure rigorous and reproducible evaluation, we established a standardised benchmarking protocol across all optimisation methods:
	
	\textbf{Algorithm Parameters:} Genetic algorithms used population sizes of 50-100 individuals with 20-150 generations, single-point crossover (probability 0.8), and mutation rates (0.1). Bayesian optimisation employed Gaussian Process with Hamming distance kernels and Expected Improvement acquisition functions. Detailed information can be found in our github
	
	https://github.com/msunstats/algebraic\_structure\_for\_combinatoric\_problems.
	
	\textbf{Stability Assessment:} To ensure statistical validity of biomarker comparisons, we rerun the same algorithm for 20 times, at minimum subgroup size thresholds of 10, 20, and 30 patients. Within each minimum subgroup size, we randomly choose 5 set of algorithm  parameters as testing all of them is not computationally feasible.

	\subsection{Performance Results}
	Table~\ref{tab:performance_summary} presents performance comparison across all experimental scenarios:
	
	\begin{table}[htbp]
		\centering
		\caption{Performance Summary Across All Experimental Scenarios}
		\label{tab:performance_summary}
		\small
		\begin{tabular}{@{}p{3cm}lrrrr@{}}
			\toprule
			\textbf{Scenario} & \textbf{Method} & \textbf{Runs} & \textbf{Mean} & \textbf{Std} & \textbf{Time} \\
			& & & \textbf{Fitness} & \textbf{Dev} & \textbf{(s)} \\
			\midrule
			Real Data & GA (With Quotient Space) & 1152 & 82.63 & 52.41 & 46.6 \\
			(No Numeric) & GA (No Quotient Space) & 1152 & 79.52 & 50.38 & 20.7 \\
			& BO (With Quotient Space) & 1296 & 53.41 & 44.56 & 13.9 \\
			& BO (No Quotient Space) & 1295 & 53.36 & 44.55 & 14.0 \\
			& Greedy Algorithm & 540 & 58.10 & 46.10 & 0.88 \\
			\midrule
			Real Data & GA (With Quotient Space) & 1152 & 82.74 & 52.94 & 58.9 \\
			(With Numeric) & GA (No Quotient Space) & 1152 & 80.58 & 52.54 & 27.4 \\
			& BO (With Quotient Space) & 1296 & 68.02 & 50.12 & 27.9 \\
			& BO (No Quotient Space) & 1296 & 67.84 & 49.90 & 28.0 \\
			& Greedy Algorithm & 540 & 80.19 & 53.39 & 1.77 \\
			\midrule
			Synthetic Data & GA (With Quotient Space) & 1152 & 3.12 & 0.39 & 41.3 \\
			(No Numeric) & GA (No Quotient Space) & 1152 & 3.08 & 0.40 & 18.0 \\
			& BO (No Quotient Space) & 1296 & 2.73 & 0.49 & 13.7 \\
			& BO (With Quotient Space) & 1296 & 2.72 & 0.49 & 13.6 \\
			& Greedy Algorithm & 540 & 2.85 & 0.51 & 0.84 \\
			\midrule
			Synthetic Data & GA (With Quotient Space) & 1152 & 3.19 & 0.46 & 56.2 \\
			(With Numeric) & GA (No Quotient Space) & 1152 & 3.09 & 0.49 & 23.5 \\
			& BO (No Quotient Space) & 1296 & 2.65 & 0.31 & 27.9 \\
			& BO (With Quotient Space) & 1296 & 2.65 & 0.29 & 27.9 \\
			& Greedy Algorithm & 540 & 2.86 & 0.40 & 1.44 \\
			\bottomrule
		\end{tabular}
	\end{table}
	
	\subsubsection{Performance Relative to Theoretical Global Optimum}
	A critical validation of our optimisation methods comes from comparing their performance to the theoretical global optimum discovered through exhaustive search for discrete optimisation cases. This analysis provides unprecedented insight into how close our heuristic methods come to achieving true optimal solutions. For those scenarios with continuous variables, such theoretical benchmarking is not feasible as the number of combination is simply uncountable. Thus we shall focus on the discrete case only.
	
	Table~\ref{tab:performance_vs_optimum} presents the performance of each method relative to the theoretical global optimum, expressed as the mean/median of the ratio between actual fitness score vs global optimum fitness score, and how frequently each method successfully identifies the exact theoretical global optimum.
	
	\begin{table}[htbp]
		\centering
		\caption{Global Optimum Achievement Analysis by Method}
		\label{tab:performance_vs_optimum}
		\small
		\begin{tabular}{@{}p{3cm}lrrr@{}}
			\toprule
			\textbf{Scenario} & \textbf{Method} & \textbf{Mean} & \textbf{Median} & \textbf{\% of Cases Achieving} \\
			& & \textbf{Ratio} & \textbf{Ratio} & \textbf{Global Optimum} \\
			\midrule
			Real Data & GA (With Quotient Space) & 0.9937 & 1.0000 & 77.26\% \\
			(No Numeric) & GA (No Quotient Space) & 0.9635 & 0.9791 & 34.64\% \\
			& BO (With Quotient Space) & 0.6554 & 0.6253 & 2.55\% \\
			& BO (No Quotient Space) & 0.6574 & 0.6218 & 2.47\% \\
			& Greedy Algorithm & 0.7069 & 0.7095 & 2.78\% \\
			\midrule
			Synthetic Data & GA (With Quotient Space) & 0.9885 & 0.9993 & 48.09\% \\
			(No Numeric) & GA (No Quotient Space) & 0.9760 & 0.9905 & 36.63\% \\
			& BO (No Quotient Space) & 0.8620 & 0.8504 & 1.93\% \\
			& BO (With Quotient Space) & 0.8604 & 0.8504 & 1.93\% \\
			& Greedy Algorithm & 0.8986 & 0.9050 & 2.78\% \\
			\bottomrule
		\end{tabular}
	\end{table}
	
	\textbf{Performance Hierarchy:} GA methods achieve highest performance relative to theoretical optimum, with quotient-space-aware GA consistently outperforming standard GA. BO and Greedy methods show lower performance for this discrete combinatorial problem. Synthetic data results mirror real data trends, but the extent of global optimum discovery of GA is much lower as the synthetic data is generated by sampling from each categorical variable uniformly (whereas in real dry eye patients, some features are indeed enriched vs healthy volunteers), thus the chance of finding a very specific subgroup of patients is lower (in fact the mean fitness score is a lot lower in synthetic data than real data)
	
	\textbf{Quotient Space Learning Advantage:} Quotient-space-aware GA achieves higher global optimum discovery rates (48.09-77.26\%) versus standard GA (34.64-36.63\%) based on categorical variables. BO and Greedy methods rarely find global optima (1.93-2.78\%), demonstrating that exploiting algebraic structure enhances optimisation effectiveness, but it is method dependent. This is likely because the Hamming distance kernel and GP surrogate modelling may be less well-suited to exploiting the discrete quotient space structure compared to population-based evolutionary methods.

	\subsubsection{Impact of Numeric Features on Real Clinical Data Performance}
	A critical aspect of our evaluation examines how the inclusion of numeric clinical features affects optimisation performance. Comparing real clinical data with and without numeric features reveals important insights into the complexity and optimisation landscape of clinical rule discovery.
	
	\textbf{Feature Complexity Analysis:} The inclusion of numeric features substantially increases the search space complexity. From Table~\ref{tab:performance_summary}, we observe that Quotient-space-aware GA maintains consistent performance levels between scenarios (mean fitness 82.63 vs 82.74), suggesting robust handling of increased feature dimensionality. However, Quotient-space-aware BO methods show improved performance with numeric features (68.02 vs 53.41), indicating that BO may benefit from the additional numeric search dimensions.
	
	\textbf{Computational Impact:} The inclusion of numeric features increases computational overhead across all methods. GA execution times increase from 45.8s to 58.9s for quotient-space-aware variants, while BO methods show substantial increases from 13.6s to 27.9s. This overhead reflects the additional complexity in encoding, evaluation, and optimisation of mixed categorical-numeric rule spaces.
	
	\textbf{Quotient Space Learning Effectiveness:} Quotient space learning benefits remain consistent across both feature configurations for GA methods, with similar improvement percentages. This consistency suggests that the algebraic quotient space structure is preserved regardless of feature type composition, validating the robustness of our monoid action framework.
	
	\subsubsection{Method-Specific Quotient Space Learning Analysis}
	Figure~\ref{fig:ga_comparison} and Figure~\ref{fig:bo_comparison} provide detailed individual comparisons of quotient-space-aware versus standard implementations for GA and BO methods respectively, illustrating the contrasting effectiveness of quotient space learning across different algorithmic paradigms.
	
	\begin{figure}[htbp]
		\centering
		\begin{subfigure}[b]{0.48\textwidth}
			\includegraphics[width=\textwidth]{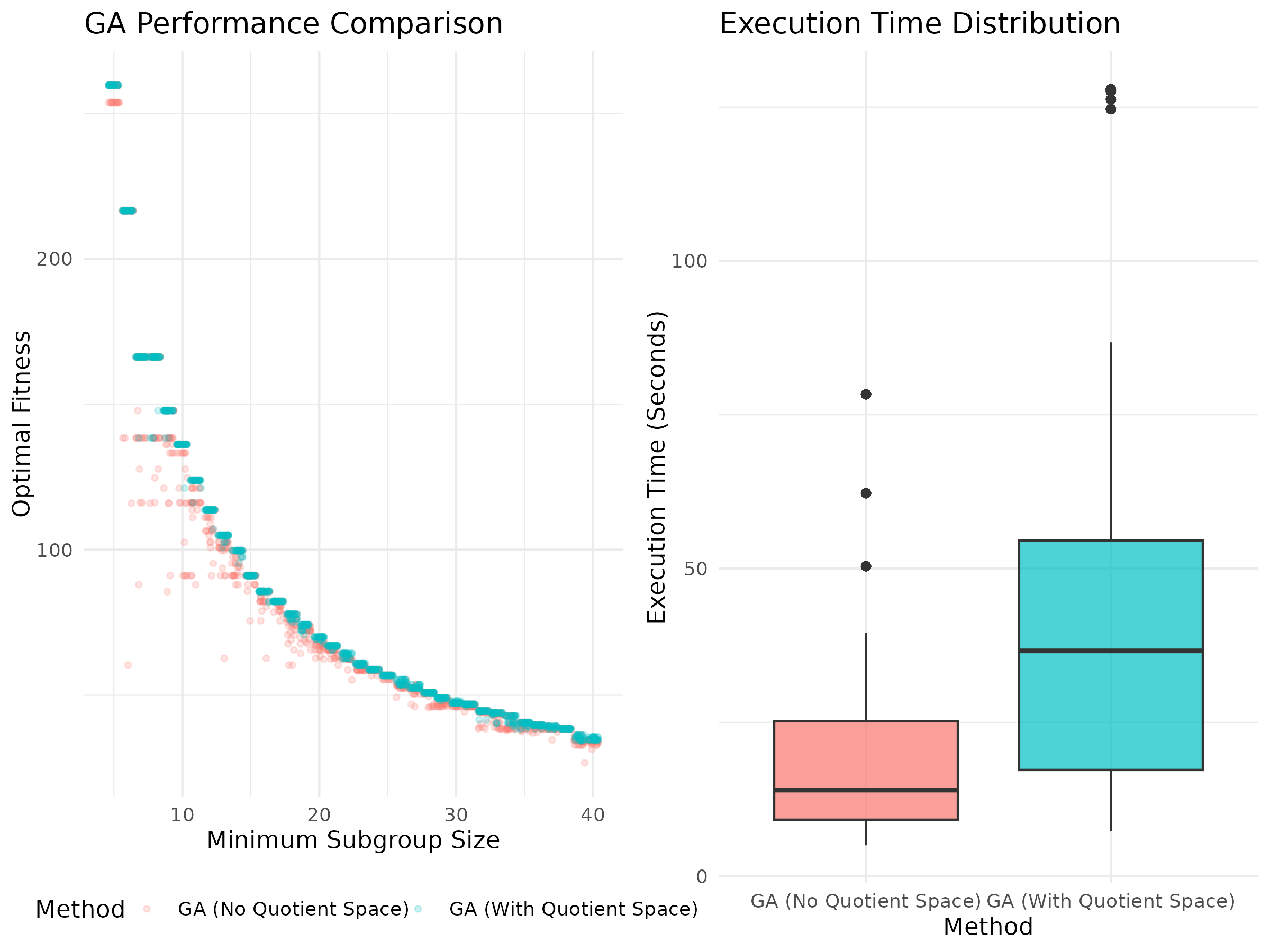}
			\caption{Real Data Without Numeric Features}
		\end{subfigure}
		\hfill
		\begin{subfigure}[b]{0.48\textwidth}
			\includegraphics[width=\textwidth]{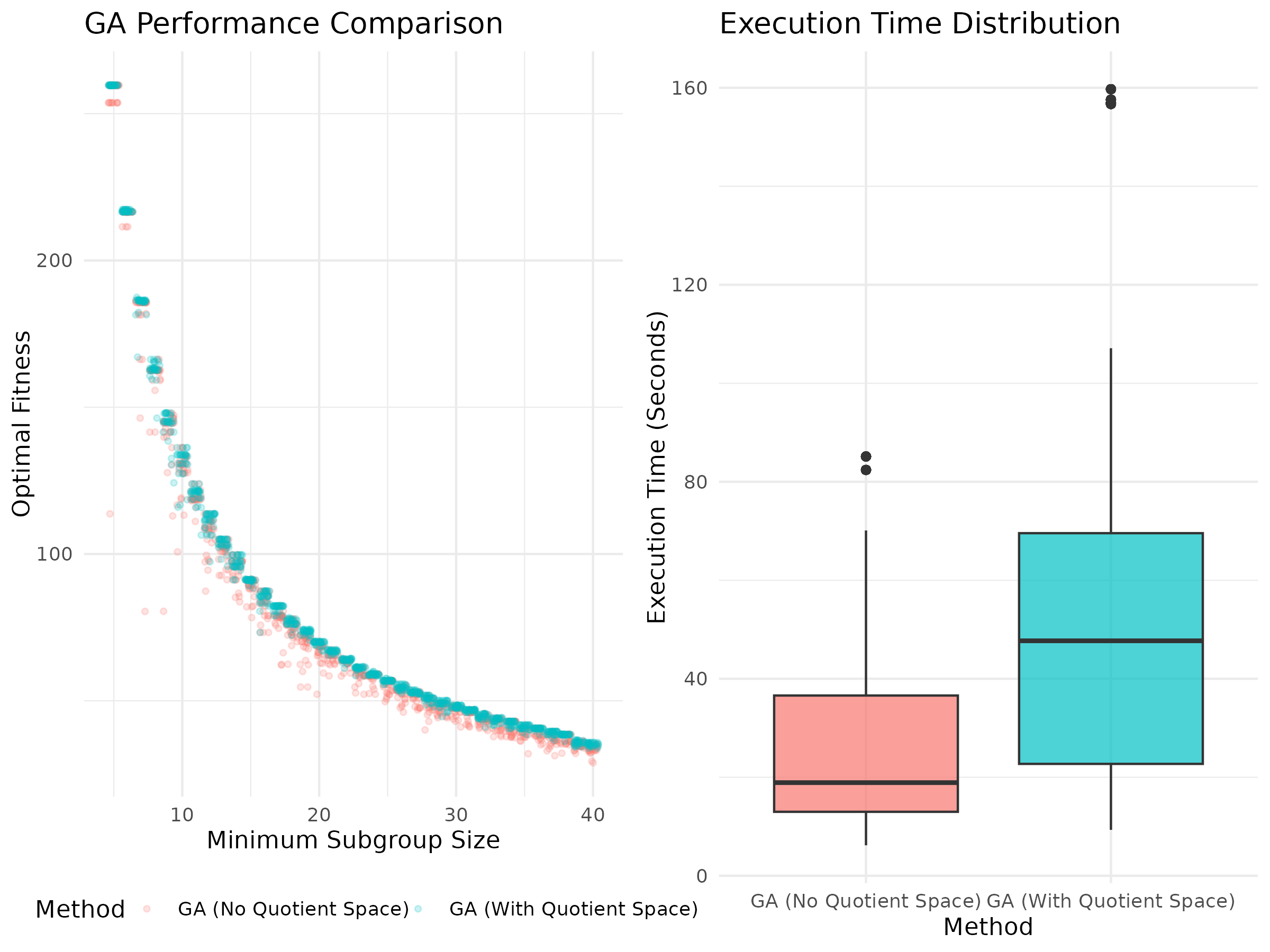}
			\caption{Real Data With Numeric Features}
		\end{subfigure}
		\caption{Genetic Algorithm Performance Comparison: Real Data With vs Without Quotient Space Learning across Feature Configurations. The side-by-side comparison demonstrates consistent improvements when quotient space learning is incorporated into GA methods across both categorical-only (a) and mixed categorical-numeric (b) feature scenarios.}
		\label{fig:ga_comparison}
	\end{figure}
	
	\begin{figure}[htbp]
		\centering
		\begin{subfigure}[b]{0.48\textwidth}
			\includegraphics[width=\textwidth]{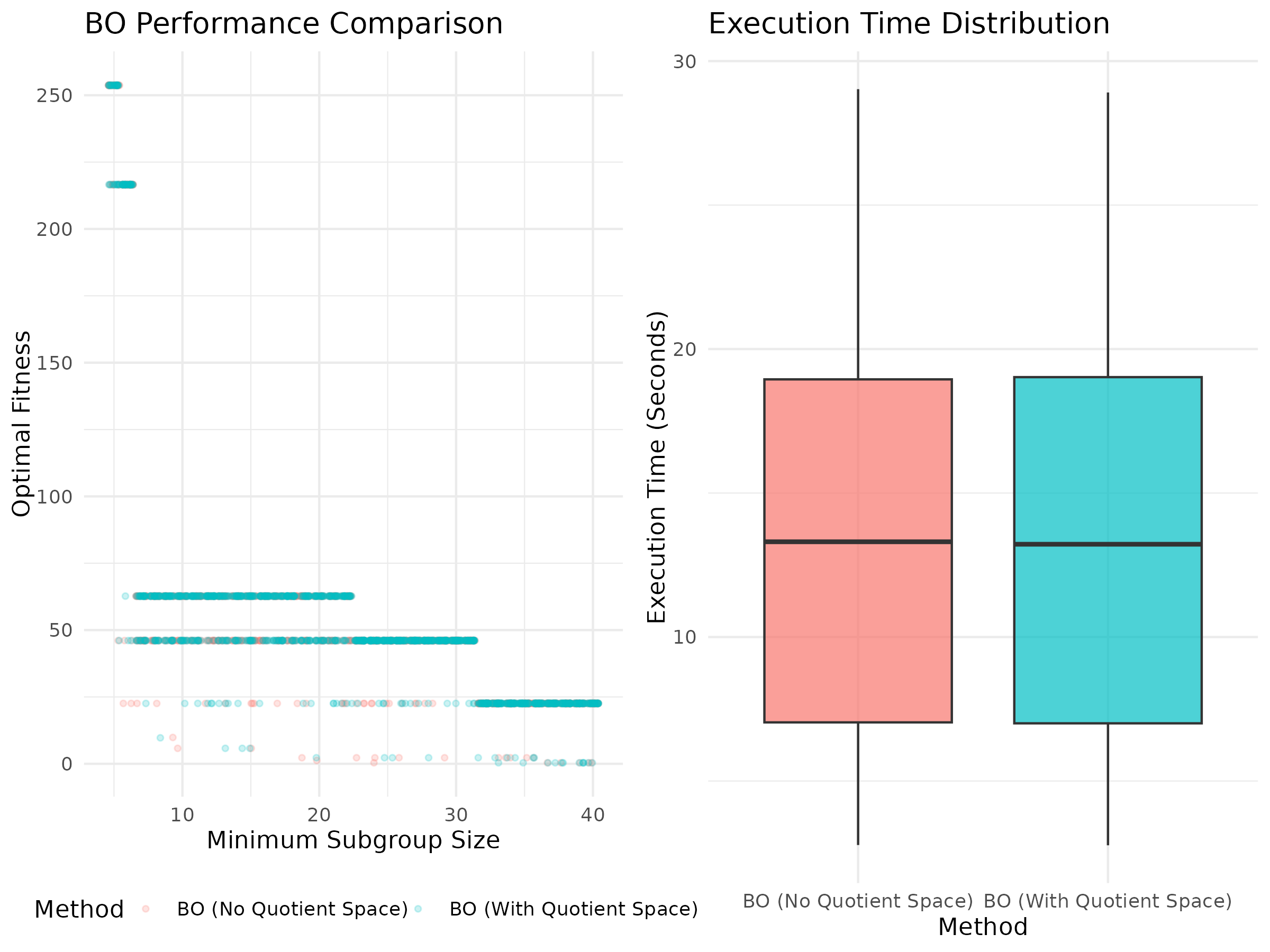}
			\caption{Without Numeric Features}
		\end{subfigure}
		\hfill
		\begin{subfigure}[b]{0.48\textwidth}
			\includegraphics[width=\textwidth]{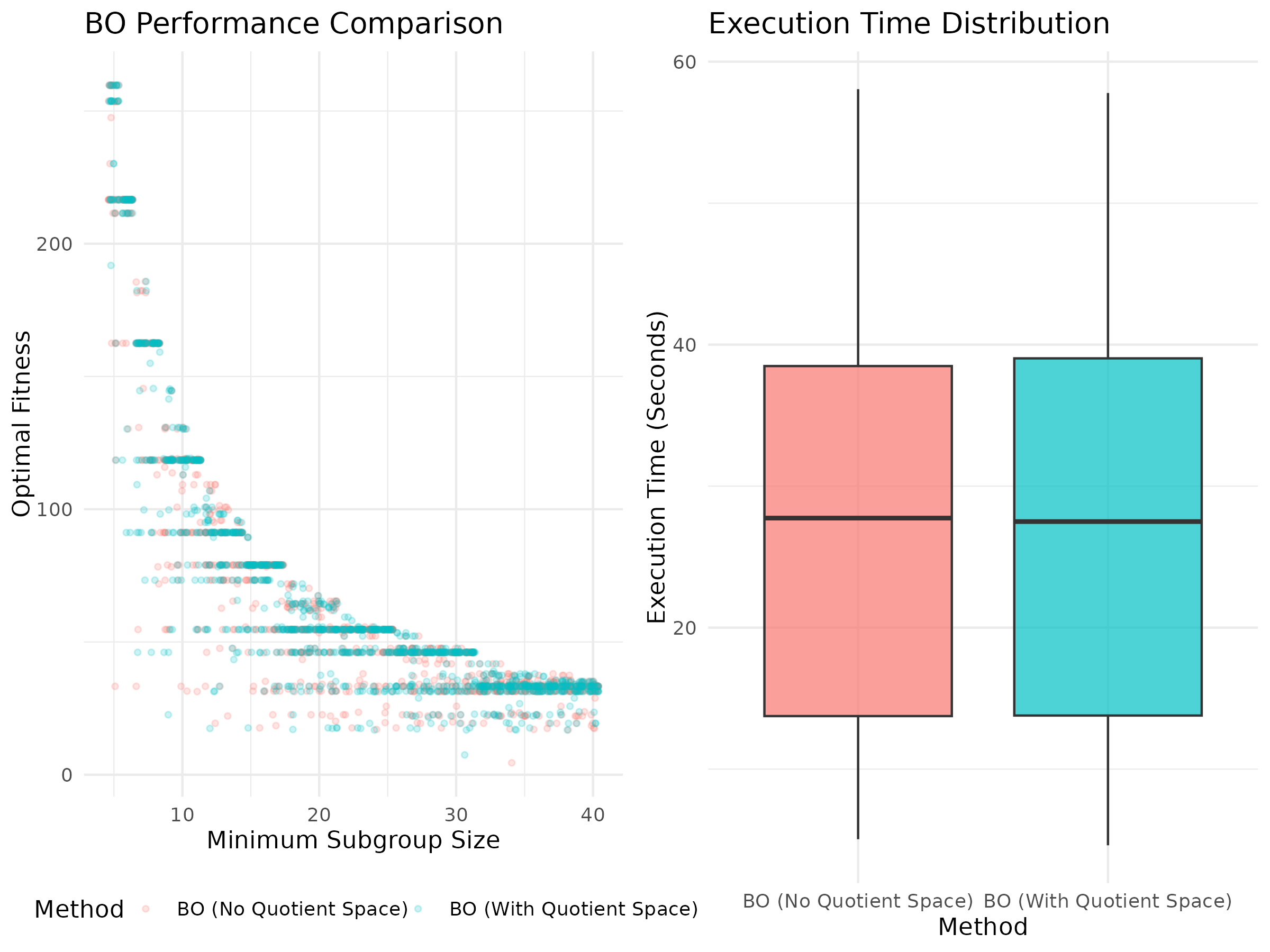}
			\caption{With Numeric Features}
		\end{subfigure}
		\caption{Bayesian Optimisation Performance Comparison: With vs Without Quotient Space Learning across Feature Configurations. In contrast to GA methods, BO approaches show mixed and inconsistent results from quotient space learning integration across both feature scenarios, suggesting that the Hamming distance kernel and quotient space approach may be less optimal for BO-based discrete optimisation in this domain.}
		\label{fig:bo_comparison}
	\end{figure}
	
	\subsection{Stability and Robustness Analysis}
	Figures~\ref{fig:stability_analysis_no_numeric} and~\ref{fig:stability_analysis_with_numeric} demonstrate that quotient-space-aware methods achieve superior performance with enhanced stability and robustness across both categorical-only and mixed feature scenarios.
	
	\begin{figure}[htbp]
		\centering
		\begin{subfigure}[b]{\textwidth}
			\includegraphics[width=\textwidth]{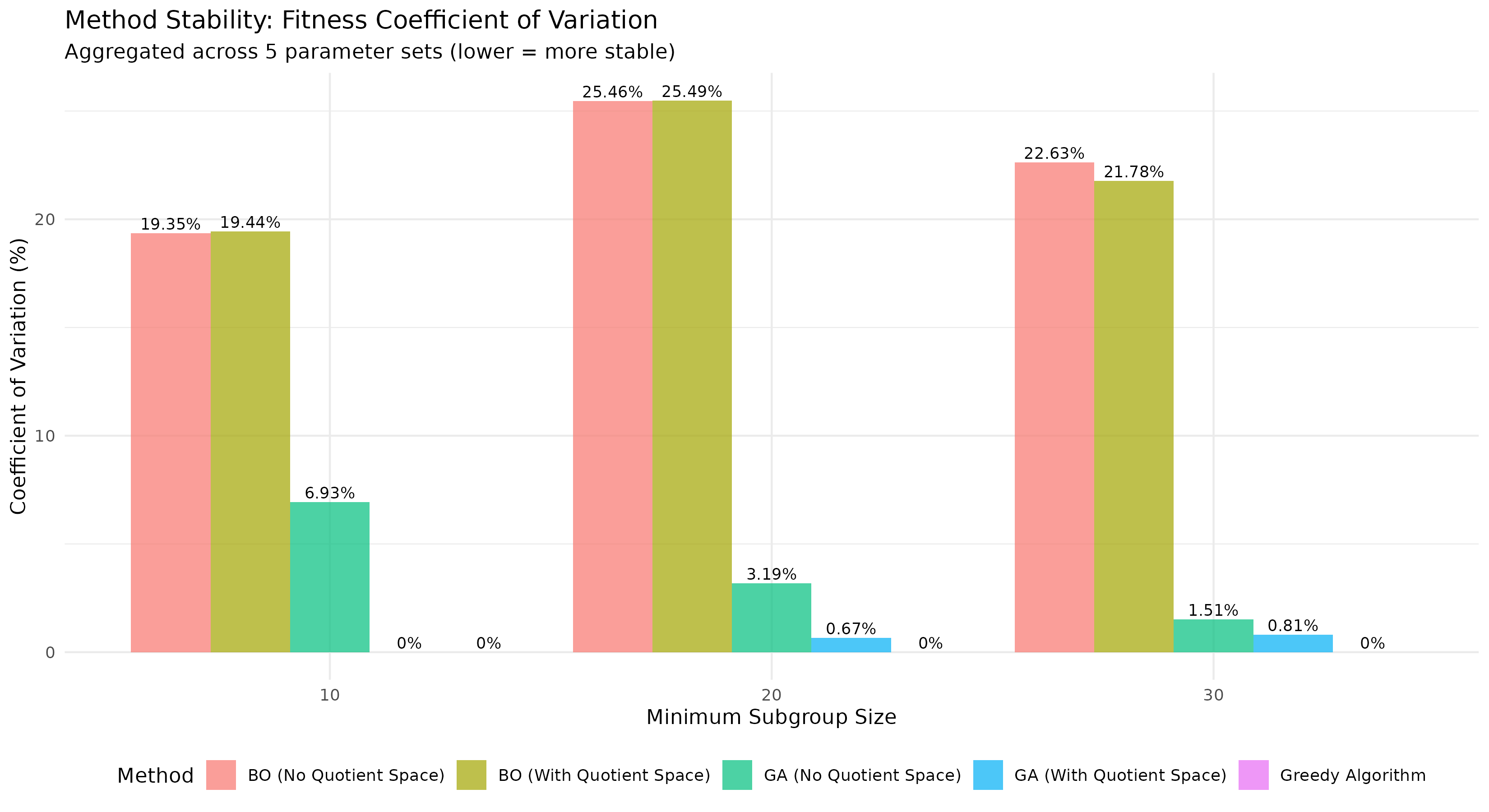}
			\caption{Coefficient of Variation}
		\end{subfigure}
		\hfill
		\begin{subfigure}[b]{\textwidth}
			\includegraphics[width=\textwidth]{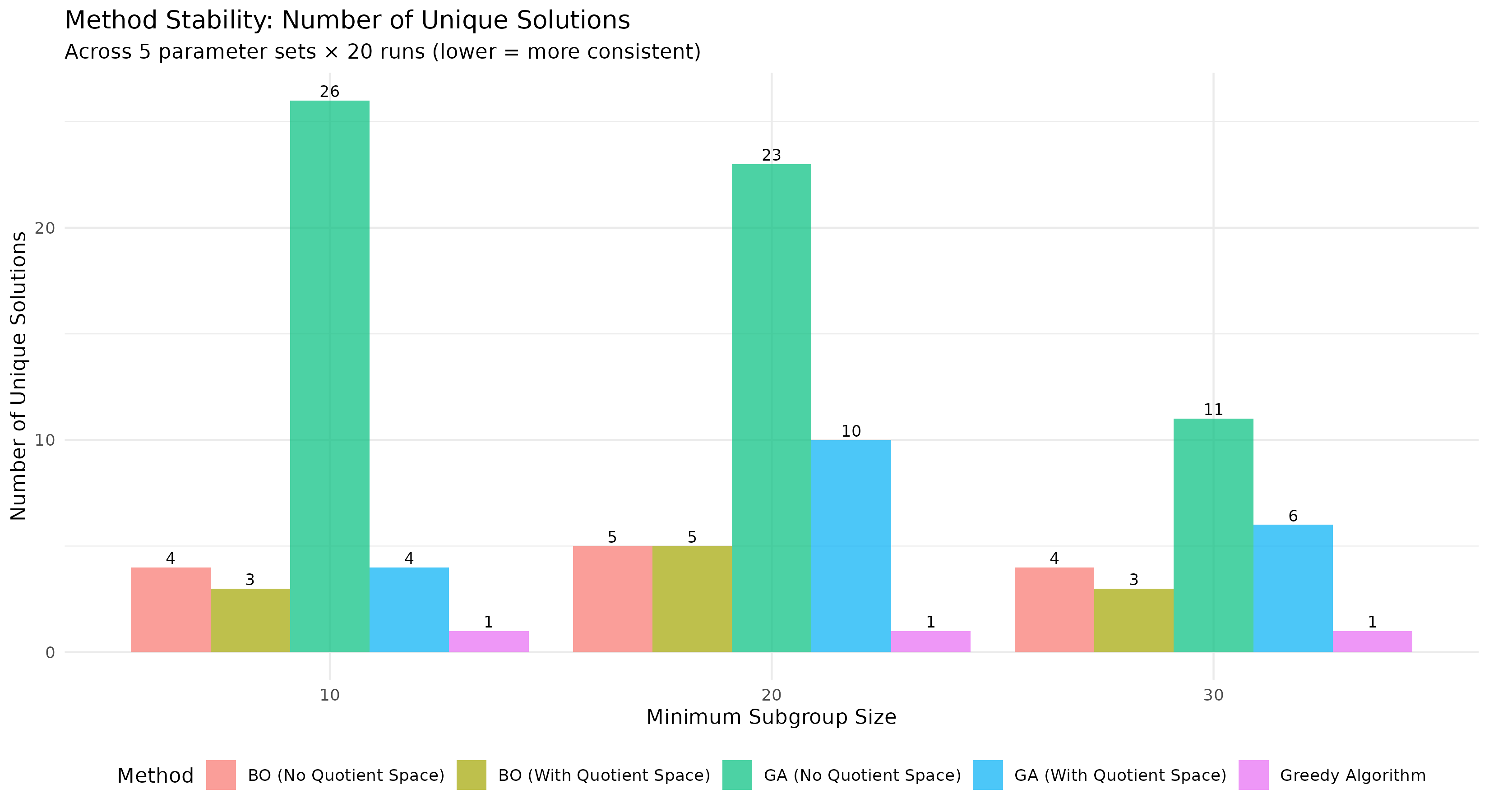}
			\caption{Solution Diversity}
		\end{subfigure}
		\caption{Comprehensive Stability Analysis for Real Clinical Data without Numeric Features. Solution diversity metrics reveal that quotient space learning maintains exploration capabilities while improving convergence quality even with simplified feature spaces.}
		\label{fig:stability_analysis_no_numeric}
	\end{figure}
	
	\begin{figure}[htbp]
		\centering
		\begin{subfigure}[b]{\textwidth}
			\includegraphics[width=\textwidth]{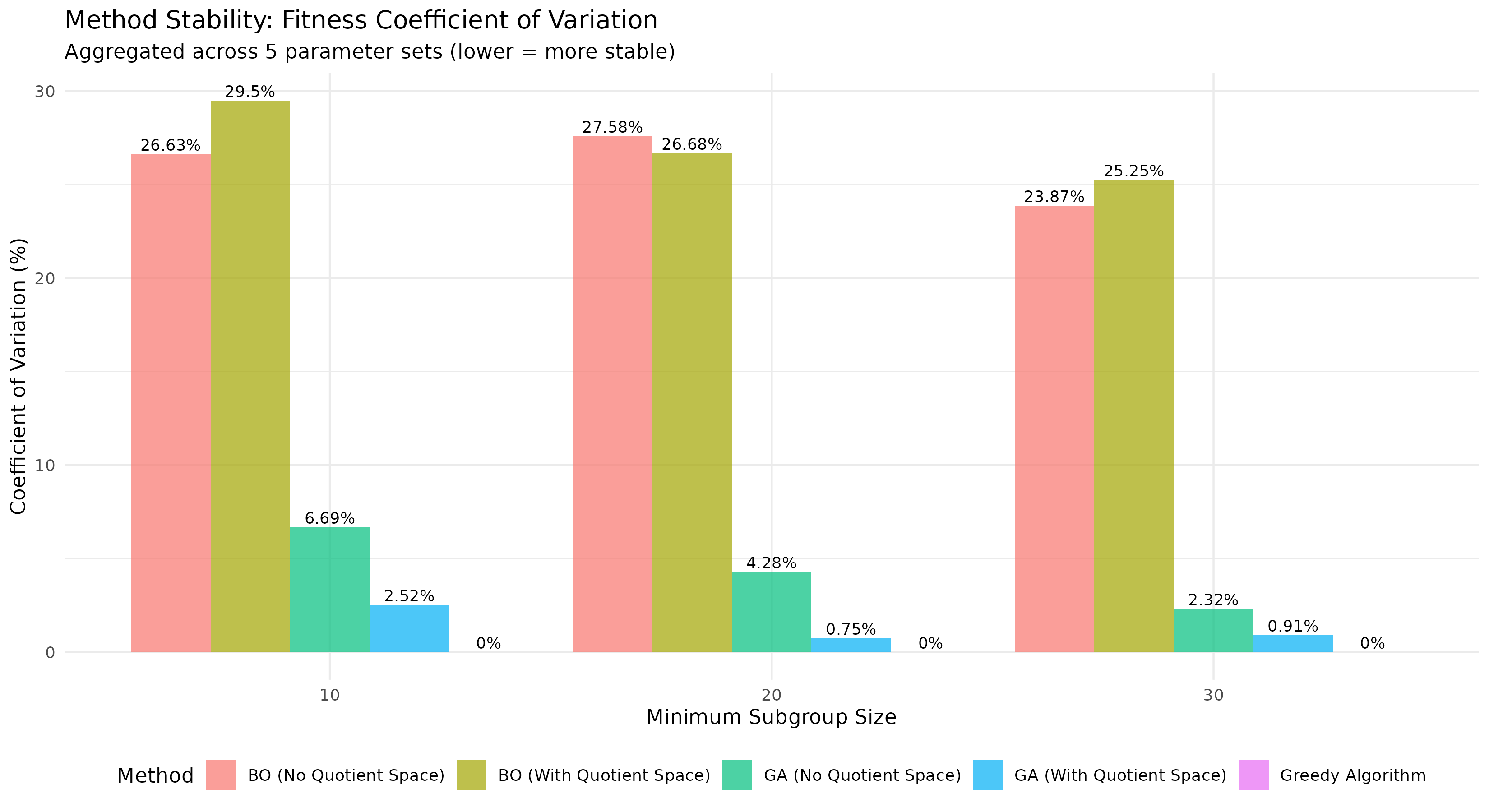}
			\caption{Coefficient of Variation}
		\end{subfigure}
		\hfill
		\begin{subfigure}[b]{\textwidth}
			\includegraphics[width=\textwidth]{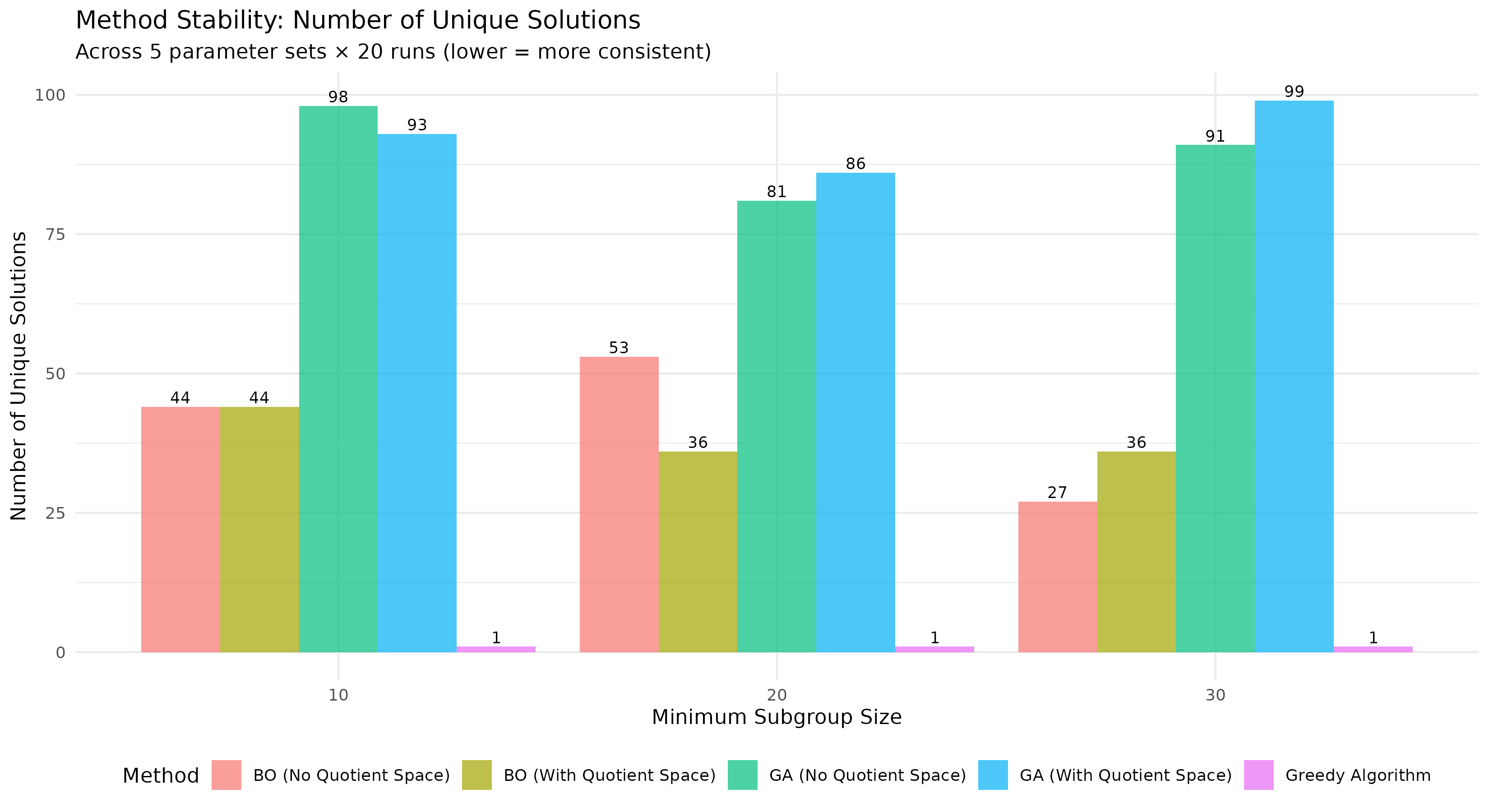}
			\caption{Solution Diversity}
		\end{subfigure}
		\caption{Comprehensive Stability Analysis for Real Clinical Data with Numeric Features. (a) Coefficient of variation analysis demonstrates reduced variability in quotient-space-aware GA, indicating enhanced algorithmic stability in mixed feature spaces. (b) Number of unique solutions with continuous data is not appropriate as continuous threshold encoding can generate mathematically distinct but clinically equivalent rules (e.g., x > 12.5 vs x > 12.501), artificially inflating diversity measures without corresponding clinical significance.}
		\label{fig:stability_analysis_with_numeric}
	\end{figure}
	
	\subsection{Clinical Implications and Translational Relevance}
	The superior performance of quotient-space-aware GA methods has direct clinical implications: enhanced biomarker discovery through identifying optimal patient subgroups, robust rule discovery validated against theoretical optimum, and computational feasibility for clinical research applications with reasonable execution times (in our case under 1 minute).
	
	This comprehensive case study demonstrates the practical effectiveness of our algebraic framework. We now examine the general applicability conditions and explore how this framework could benefit other combinatorial optimisation domains.

	\section{General Framework: Applicability and Extensions}
	
	The patient stratification case study demonstrates how algebraic structure can be systematically exploited for combinatorial optimisation. This suggests a broader framework applicable to many combinatorial optimisation problems with similar structural properties. 
	
	\subsection{Framework Applicability Conditions}
	
	Our algebraic structure discovery framework applies to combinatorial optimisation problems satisfying these conditions:
	
	\begin{enumerate}
		\item \textbf{Compositional Structure:} The problem involves combining discrete components (atomic rules, filters, constraints) through well-defined operations.
		\item \textbf{Associative Operations:} The combination operation exhibits associativity, enabling formation of algebraic structures (monoids, semigroups, groups).
		\item \textbf{Functional Equivalence:} Multiple distinct combinations yield similar outcomes, creating natural equivalence classes.
		\item \textbf{Expensive Evaluation:} Objective function evaluation is computationally expensive or the search space is simply too large to be explored combinatorically.
	\end{enumerate}
	
	\subsection{Potential Applications Beyond Current Case Studies}
	
	This framework could potentially benefit other combinatorial optimisation domains where logical rule combination creates algebraic structure.

	\subsubsection{Rule-Based Molecular Screening}
	
	In modern drug discovery, virtual libraries may contain millions to billions of candidate molecules. 
	It is computationally infeasible to evaluate every compound with expensive methods such as docking, molecular dynamics, or quantum chemistry. 
	Instead, industry-standard practice applies inexpensive, interpretable \emph{rule-based filters}, such as Lipinski’s Rule of Five, Veber’s rules, PAINS filters, and REOS, to remove compounds with poor druglikeness or undesirable substructures, adhering to the 'fail early, fail cheaply' principle \cite{Oprea2002,VSbestPractice2021,PhamThe2020,Sukhachev2025}. 
	After this, progressively more accurate and expensive \emph{in silico} predictions are used to prioritise candidates.
	
	We propose a principled algebraic framework for \emph{discovering optimal rule sets} tailored to a given objective (e.g., maximising predicted binding affinity or optimising ADME properties). The discovered rules can be used to guide future optimisation of molecular design, and the yielding subset of molecule candidates can be moved forward to synthesis and downstream wetlab testing. For this case study only a sketch of the full framework is provided.
	
	The framework relies on three key insights:
	
	\begin{enumerate}
		\item \textbf{Monoid structure of rules.}  
		Each filter $a_i$ is a Boolean predicate on molecules. 
		Combining filters by conjunction ($a_i \land a_j$) is associative, admits an identity element (the empty rule), and thus forms an abelian monoid $(\mathcal{S}, \land, \varepsilon)$.
		
		\item \textbf{Action on molecules.}  
		The monoid $\mathcal{S}$ acts on the compound library $C_0$. 
		A screening strategy $s \in \mathcal{S}$ yields the subset 
		\[
		C_s = \{ c \in C_0 \mid c \text{ satisfies all filters in } s \}.
		\]
		
		\item \textbf{Quotient space of filter sets.}  
		Different sequences of the same filters (e.g., $(a_1,a_2,a_3)$ vs.\ $(a_3,a_1,a_2)$) yield the same surviving set $C_s$. 
		We define an equivalence relation $s_1 \sim s_2$ if $C_{s_1} = C_{s_2}$. 
		The resulting quotient space $\mathcal{S}/\!\sim$ consists of unique \emph{sets} of filters rather than their permutations. 
		This reduces the search space from factorial size ($O(N!)$ ordered sequences) to exponential size ($O(2^N)$ unordered subsets). When the number of criteria is large, factorial is way larger than exponential in astronomic scale.
	\end{enumerate}
	
	\noindent
	\textbf{Optimisation strategy.} 
	We separate the discovery and cost objectives:
	
	\begin{itemize}
		\item \emph{Step 1 (Discovery).} Search the quotient space $\mathcal{S}/\!\sim$ using a genetic algorithm or other black-box optimiser to identify the optimal filter set $s^{\ast}$ that maximises efficacy (e.g., predicted affinity of $C_{s^{\ast}}$).
		
		\item \emph{Step 2 (Cost optimisation).} 
		Once the best filter set $s^{\ast}$ has been identified, the final molecules $C_{s^{\ast}}$ are independent of the order in which the filters are applied. 
		However, the \emph{execution cost} does depend on order: applying a cheap and selective filter early can reduce the number of molecules passed to later, more expensive filters.
		
		Let each filter $a_i \in s^{\ast}$ have cost $c_i$ (per molecule) and selectivity $s_i \in [0,1]$ (fraction removed). 
		If filter $a_j$ is applied after $a_i$, its expected cost is $(1-s_i)c_j$, since only a fraction $(1-s_i)$ of molecules survive from the previous stage. 
		For a sequence of filters $\pi=(i_1,\dots,i_m)$, the expected per-molecule cost is therefore
		
		\begin{equation}
			\mathbb{E}[\text{Cost}(\pi)] = c_{i_1} + (1-s_{i_1})c_{i_2} + (1-s_{i_1})(1-s_{i_2})c_{i_3} + \cdots \nonumber
		\end{equation}
		
		Thus intuitively we should place $i$ before $j$ if $c_i/s_i \leq c_j/s_j$ (assume $s_i, s_j > 0$), which conceptually implies that the cheapest and most selective filters should be applied first.
	\end{itemize}
	
	\noindent
	\textbf{Discussion: numeric vs.\ categorical criteria.} 
	Many molecular properties used in screening are inherently \emph{numeric} (e.g., $\log P$, molecular weight, polar surface area, predicted solubility, docking score). 
	For such continuous properties, one could train a neural network (NN) to predict ADME outcomes and then apply reverse optimisation (e.g., gradient-based search) to directly identify molecules with desirable property profiles. 
	
	However there also exists categorical/boolean filters, e.g. PAINS motif, satisfies Lipinski's Rule of Five, absence of toxic substructures.	These are not easily handled by smooth optimisation and are naturally represented in our algebraic framework as atomic predicates combined via conjunction. The advantage is that the resulting rules are \emph{interpretable}: they provide chemists with concrete design guidelines (e.g., 'molecular weight $< 450$ Daltons, $2 < \log P < 4$, no PAINS motif'). 
	
	Thus, our framework is especially suited for the 'cheap, interpretable filter' stage of drug discovery funnels, while NN-based optimisation can operate downstream on the survivors. 
	
	\noindent
	\textbf{Practical benefit.}  
	This framework transforms molecular screening into a two-stage optimisation: (i) discover the \emph{best region of chemical space} defined by simple, interpretable rules; (ii) execute the chosen strategy with minimal computational expense. 
	The discovered rule sets are both actionable (directly yielding candidate subsets for synthesis and wet-lab testing) and interpretable (highlighting physicochemical criteria that can guide future optimisation campaigns).

	\subsubsection{Feature Selection and Higher-Order Synergies}
	
	\textbf{Problem Setting.} In high-dimensional machine learning, selecting informative feature subsets from $n$ variables is essential. 
	The search space is the power set of features, of size $2^n$, since each feature can be either included or excluded. 
	Exhaustive search is intractable, while standard heuristics (stepwise selection, LASSO) often fail to capture feature interactions.
	
	\textbf{Higher-Order Synergies.} 
	Even in additive models, feature utility can depend on context:
	\begin{itemize}[noitemsep]
		\item \emph{Cancellation:} if $y = x_1 - x_2$ with $x_1 \approx x_2$, then $x_1$ or $x_2$ alone appears uninformative, but the pair $\{x_1,x_2\}$ explains variance well.
		\item \emph{Nonlinear interaction:} if $y = x_1x_2$, then neither $x_1$ nor $x_2$ has predictive power individually, but the interaction is perfectly predictive.
	\end{itemize}
	Thus, relevant information can reside in subsets or constructed interactions rather than single features.
	
	\textbf{Framework Application.} 
	Each atomic decision 'include feature $i$' is a generator; subsets of features form a commutative monoid under union, with the empty set as identity. 
	Equivalence classes are defined empirically: two subsets are equivalent if they yield (approximately) the same predictive performance. 
	Quotient-space exploration focuses search on \emph{classes of feature sets} rather than individual subsets, reducing redundancy and highlighting higher-order synergies.

	\subsubsection{Molecular Design under Symmetry}
	
	\textbf{Problem Setting.} 
	In computational chemistry and molecular design, candidate molecules are often represented as graphs (atoms and bonds) or as three-dimensional coordinate structures. The search space of possible molecules or conformations is vast, and many representations are redundant because they describe the same molecule under a symmetry transformation (e.g., rotations, reflections, or permutations of identical atoms). 
	
	\textbf{Pain Points.} 
	Na\"{i}ve enumeration of molecules treats each labeling or orientation as distinct, even when they are chemically equivalent. This creates enormous redundancy. Similarly, in molecular docking, ligands with internal symmetry (such as benzene or symmetric peptides) often produce multiple binding poses that are simple rotations or reflections of one another. Docking algorithms that are unaware of symmetry report these as distinct solutions, wasting computation with redundant binding modes.
	
	\textbf{Framework Application.} 
	Symmetry operations form a group, acting on molecular coordinates or adjacency matrices. Two molecular representations are equivalent if they belong to the same orbit under this group action. The quotient space of molecular representations modulo symmetry collapses redundant labelings and orientations into canonical equivalence classes. Optimisation or enumeration can then proceed in this quotient space, ensuring that each chemically distinct structure, binding pose, or conformation is considered exactly once.

	\section{Conclusions, Limitations, and Future Work}
	
	We have introduced a systematic framework for discovering and exploiting algebraic structure in combinatorial optimisation problems. Through case studies in patient stratification and molecular screening, we demonstrated that:
	
	\begin{enumerate}
		\item Many real-world combinatorial problems exhibit hidden algebraic structure amenable to formal mathematical analysis
		\item Abstract algebra provides powerful tools for identifying and characterising this structure
		\item Quotient space construction enables dramatic search space reduction while preserving optimisation objectives
		\item Structure-aware algorithms can achieve substantial performance improvements over structure-agnostic approaches
	\end{enumerate}
	
	Our empirical patient subgroup finding case study shows that quotient-space-aware genetic algorithms achieve global optimum in 48-77\% of cases versus 35-37\% for standard approaches, demonstrating the practical value of algebraic structure exploitation.
	
	\subsection{Methodological Limitations and Considerations}
	
	While the proposed framework offers a novel and efficient approach to optimising over semigroup-structured combinatorial spaces, several methodological limitations must be acknowledged:
	
	\subsubsection{Properties of Empirical Equivalence Class Discovery}
	The empirical discovery of equivalence classes via clustering is a cornerstone of our framework, enabling significant search space reduction. However, its effectiveness relies on several factors:
	
	\begin{enumerate}
		\item \textbf{Definition of $\epsilon$-Equivalence:} The threshold $\epsilon$ in Remark~\ref{rem:practical-equivalence} directly dictates equivalence class granularity, acting as a regularisation parameter that could be tuned through cross-validation or informed by domain expertise regarding clinically meaningful differences.
		\item \textbf{Clustering Robustness:} The reliability of identified equivalence classes depends on clustering algorithm robustness and objective function noise characteristics.
		\item \textbf{Representative Selection:} Our strategy of selecting the highest-performing rule from each cluster may not guarantee true optimality for the entire equivalence class.
	\end{enumerate}
	
	Additionally, not all combinatorial problems exhibit useful algebraic structure. The framework's effectiveness depends on the specific mathematical properties of the problem domain. Problems lacking associative combination operations or meaningful equivalence classes may not benefit from this approach. Additionally, the framework is most suitable for problems where solutions can be naturally decomposed into conjunctive combinations of atomic components.
	
	\subsection{Future Research Directions}
	
	Several promising avenues for future research emerge from this work:
	
	\begin{itemize}
		\item \textbf{Extension to Complex Logical Forms:} Generalisation beyond conjunctive rules to more complex logical forms (e.g., AND of ORs, OR of ANDs), requiring extended algebraic structures and specialised kernels for Gaussian Processes.
		\item \textbf{Advanced Algebraic Structures:} Extension to more complex algebraic structures (e.g. groups or rings) beyond monoids and semigroups.
		\item \textbf{Automatic Structure Detection:} Development of algorithms for automatic detection of algebraic structure in arbitrary combinatorial problems.
		\item \textbf{Theoretical Analysis:} Rigorous theoretical analysis of quotient space optimisation guarantees and convergence properties.
		\item \textbf{Industrial Applications:} Application to large-scale industrial optimisation challenges across diverse domains.
		\item \textbf{ML Integration:} Integration with modern machine learning and artificial intelligence methods, particularly deep learning approaches where  equivalence classes are incorporated in the design of neural network structures.
		\item \textbf{Enhanced discovery of the topology of equivalence classes:} Exploration of alternative methods to better detect the topology of equivalence class discovery, enhancing stability and accuracy.
		\item \textbf{Sophisticated Representative Selection:} Develop more sophisticated methods to better select representatives from each equivalence class.
		\item \textbf{Application of Reinforcement Learning:} Since playing Super Mario can be learnt by reinforcement learning, and Super Mario is in some sense very similar to patient stratification problem. Naturally we may explore reinforcement learning to the same problem.
	\end{itemize}
	
	This work establishes abstract algebra as a valuable lens for understanding and solving complex combinatorial optimisation problems, opening new avenues for research at the intersection of pure mathematics and practical optimisation.
	
	\section{Code and Data Availability}
	All source code, experimental benchmarks, and statistical analyses are available under open-source license at https://github.com/msunstats/algebraic\_structure\_for\_combinatoric\_problems. The repository includes complete R implementations, synthetic data generators, experimental configurations, and statistical analysis scripts. Clinical datasets cannot be shared due to data confidentiality restrictions, but synthetic generators provide equivalent grounds for benchmarking.

	\bibliographystyle{plain} 
	\bibliography{references} 
	
\end{document}